%% file: colt2026-sample.tex
\documentclass[preprint,12pt]{colt2026} 

\newtheorem{assumption}[theorem]{Assumption}
\title[Constrained Contextual Bandits with Adversarial Contexts via Regression]{A Simple Reduction Scheme for Constrained Contextual Bandits with Adversarial Contexts via Regression}
\usepackage{times}

\newcounter{commentcounter}

\newcommand{\Regret}{$\mathsf{Regret}$}
\newcommand{\CCV}{$\mathsf{CCV}$}
\newcommand{\CCB}{$\mathsf{CCB}$}
\newcommand{\IGW}{$\mathsf{IGW}$}
\newcommand{\CB}{$\mathsf{CB}$}
\newcommand{\SqCB}{$\mathsf{SquareCB}$}
\newcommand{\OPT}{$\mathsf{OPT}$}
\newcommand{\cbwk}{$\mathsf{CBwK}$}
\newcommand{\cbwlc}{$\mathsf{CBwLC}$}
\usepackage{bm}
\usepackage{nicefrac} 
\usepackage{algorithm, algorithmic}
\usepackage{enumitem}
\usepackage{booktabs}

\coltauthor{%
 \Name{Dhruv Sarkar} \Email{dhruv.sarkar223@gmail.com}\\
 \addr Indian Institute of Technology Kharagpur, India
 \AND
 \Name{Abhishek Sinha} \Email{abhishek.sinha@tifr.res.in}\\
 \addr Tata Institute of Fundamental Research, Mumbai, India%
}

\begin{document}

\maketitle

\input{abstract}

\begin{keywords}%
  Online Learning, Contextual Bandits, Knapsack Constraints
\end{keywords}

\input{intro}

\input{problem_formulation}

\input{preliminaries}
\input{results}
\input{conclusion}
\newpage
\bibliography{OCO}

\appendix
\newpage
\input{budget_constraints}

\input{CBwK}
\section{Sharper bound under the Non-negative Expected Regret Assumption}
\label{sec:non-neg-reg}
While the expected regret (also known as \emph{pseudo-regret}) is always non-negative in the stochastic setting, in the constrained setting, the expected regret could be negative. This is because the comparator policy is constrained as it has to satisfy the feasibility condition at every round, while the online policy is permitted a small violation over $T$ rounds (see Eqn.\ \eqref{q-bd-no-assump} for a bound). Nevertheless, as shown below, one can derive a tighter \CCV~ bound under the weaker assumption that the average regret is $-\Theta(\sqrt{T})$. 
If \[\frac{1}{T}\sum_{t=1}^T \mathbb{E} \mathsf{Regret}_t(\pi^\star) \geq - c\sqrt{KTU_T},\]
for some constant $c \geq 0.$
Then from Eqn.\ \eqref{Q-decomp-2}, we have 
\begin{eqnarray*}
		R_T^2  \leq (4+c) VT\sqrt{KU_TT } + T^2 + 8T\sqrt{KU_T}R_T.
\end{eqnarray*}
Solving the above quadratic inequality, we conclude
\begin{eqnarray*}
	 \sqrt{\sum_{\tau=1}^T \mathbb{E}Q^2(\tau)} \equiv R_T = O(T\sqrt{KU_T}). 
\end{eqnarray*}
From Eqn. \eqref{eq:original_ineq} it follows that
\begin{align}
\mathbb{E}Q^2(T) + V\mathbb{E} \textrm{Regret}_T (\pi^\star) \leq 4V\sqrt{K U_TT} + 2T + 8 \sqrt{K U_T}R_T
\end{align}
Finally, choosing the parameter $V = \sqrt{KU_T T}$, we obtain
\begin{align*}
    \mathbb{E} \textrm{Regret}_T (\pi^\star) = O(\sqrt{KU_T T}), ~~    \mathbb{E}Q(T) = O(\sqrt{KU_T T}).
\end{align*}

\section{$\mathsf{CBwLC}$ with Stochastic Contexts}
We now consider stochastic contexts, signed costs, and a long-term budget feasible benchmark, first considered by \citet{slivkins2023contextual}. However, unlike \citet{slivkins2023contextual}, who consider the large-budget regime ($B_T= \Omega(T)$), we consider the complimentary case of small budget regime ($B_T=o(T).$) This setting must be contrasted with the Contextual Bandits with Knapsack constraints ($\mathsf{CBwK}$), which only consider non-negative costs for arbitaray budget and adversarial contexts. 

Continuing from inequality \eqref{eq:with_benchmark} and taking expectation w.r.t. the random context $x_t$, we have
\begin{align}
 	&\mathbb{E}(\Phi(Q(t))|\mathcal{F}_{t-1}) - \Phi(Q(t-1))+ \mathbb{E}_{x_t\sim \mathbb{P}}\langle \bm{f}^\star(x_t), \bm{\pi^\star}(x_t) - \bm{\pi}_t(x_t)\rangle \nonumber \\
 	 &\leq \mathbb{E}_{x_t\sim \mathbb{P}} \langle \bm{L}^\star_t(x_t),\bm{\pi}^\star(x_t) -\bm{\pi}_t(x_t) \rangle  + \frac{1}{2}(\Phi''(Q(t))+ \Phi''(Q(t-1))) + \nonumber \\
     &\Phi'(Q(t-1)) \mathbb{E}_{x_t\sim \mathbb{P}}\mathbb{E}_{a^\star \sim \pi^\star(x_t)} g_t(x_t, a^\star),
 \end{align} 
where $\mathbb{P}$ is the (unknown) law according to which the contexts are distributed. Further note that due to stationarity of the benchmark policy, we have  $\mathbb{E}_{x_t\sim \mathbb{P}}\mathbb{E}_{a^\star \sim \pi^\star(x_t)} g_t(x_t, a^\star) \leq b = \frac{B_T}{T}.$
We now choose the Lyapunov function to be $\Phi(x) = \frac{x^2}{V}$, for some positive parameter $V,$ to be fixed later. Summing up the above inequalities up to round $t$ and then taking expectation over the remaining randomness, we have that for any $t \in [T]:$
\begin{align}
\label{eq:original_ineq_cbwlc}
&\mathbb{E}Q^2(t) - Q^2(0) + V\mathbb{E} \textrm{Regret}_t (\pi^\star) \nonumber \\ \leq &4V\sqrt{K U_Tt} + 2t + 8 \sqrt{K U_T}\sqrt{\sum_{\tau=1}^t \mathbb{E}Q^2(t)} + \frac{B_T}{T}\sum_{\tau=1}^t \mathbb{E}Q(\tau).
\end{align}
Summing the above from $t=1$ to $T$, and noting that $Q(0)=0,$ we have
\begin{align} \label{Q-decomp-2_cbwlc}
    &\sum_{t=1}^T\mathbb{E}Q^2(t) + V\sum_{t=1}^T\mathbb{E} \textrm{Regret}_t (\pi^\star) \nonumber \\ \leq &4VT^{3/2}\sqrt{K U_T} + 2T^2 + 8 T\sqrt{K U_T}\sqrt{\sum_{t=1}^T \mathbb{E}Q^2(t)} + B_T\sum_{t=1}^T \mathbb{E}Q(t).
\end{align}
Applying Cauchy-Schwarz inequality to $\sum_{t=1}^T \mathbb{E}Q(t)$ we get,
\begin{align*}
    \sum_{t=1}^T \mathbb{E}Q(t) \leq \sqrt{T}\sqrt{\sum_{t=1}^T\mathbb{E}Q^2(\tau)}. 
\end{align*}
Let us now define the variable $R(T) := \sqrt{\sum_{t=1}^T \mathbb{E}Q^2(t)}.$ Note that we trivially have $\mathbb{E} \textrm{Regret}_t (\pi^\star) \geq -2T$. This is because $\mathbb{E}\textrm{Regret}_t (\pi^\star) = \sum_\tau f(\pi_{\tau}(x_\tau),x_\tau) - f(\pi^\star(x_\tau),x_\tau)$ and since we assume that the function $f$ takes values in $[-1,1],$ we have $f(\pi_{\tau}(x_\tau),x_\tau) - f(\pi^\star(x_\tau),x_\tau) \geq -2\,\, \forall \tau \in [T]$. Plugging in this bound in inequality \eqref{Q-decomp-2_cbwlc}, we conclude:
\begin{align*}
    R^2(T) \leq VT^2 + 2T^2 + 4VT^{3/2}\sqrt{K U_T} + 8 T\sqrt{K U_T}R(T) + B_T\sqrt{T}R(T).
\end{align*}
Noticing that the above inequality is of the form $x^2 \leq ax + b$ where $x\equiv R(T),$ and using the bound from Lemma \ref{lemma:quadratic}, we get the following upper bound on $R(T)$:
\begin{align}
\label{eq:bound-R(t)_cbwlc}
    R(T) \leq \sqrt{V}T + 2T + 2K^{1/4}\sqrt{V}T^{3/4} (U_T)^{1/4} + 8 T\sqrt{K U_T} + B_T\sqrt{T}.
\end{align}

We further note that inequality \eqref{eq:original_ineq_cbwlc} can be rewritten in term of $R(T)$ as follows:
\begin{align} \label{q-reg-ineq0_cbwlc}
\mathbb{E}Q^2(T) + V\mathbb{E} \textrm{Regret}_T (\pi^\star) \leq 4V\sqrt{K U_TT} + 2T + 8 \sqrt{K U_T}R(T) + \frac{B_T}{\sqrt{T}}R(T).
\end{align}
Plugging in the upper bound on $R(T)$ from \eqref{eq:bound-R(t)_cbwlc} into the above inequality, we obtain
\begin{align} \label{Q-reg-ineq1_cbwlc}
&\mathbb{E}Q^2(T) + V\mathbb{E} \textrm{Regret}_T (\pi^\star) \nonumber \\ \leq &4V\sqrt{K U_TT} + 2T + 8 \sqrt{VK U_T}T + 16\sqrt{KU_T}T \nonumber \\ &+ 16 K^{3/4}\sqrt{V} T^{3/4} U_T^{3/4} + 64KU_TT + B_T^2.
\end{align}
Hence, using $\mathbb{E}(Q^2(T))\geq 0,$ we obtain the following regret bound:
\begin{align*}
    \mathbb{E} \mathsf{Regret}_T \leq O\bigg(\max(\sqrt{K TU_T}, \sqrt{\frac{KU_T}{V}}T, \frac{1}{\sqrt{V}}K^{3/4} T^{3/4} U_T^{3/4}, \frac{KU_T T}{V}, \frac{B_T^2}{V})\bigg).
\end{align*}
Furthermore, substituting the trivial regret lower bound $\mathbb{E} \textrm{Regret}_t (\pi^\star) \geq -2T$ into \eqref{Q-reg-ineq1_cbwlc} and using Jensen's inequality $\mathbb{E}Q^2(T) \geq (\mathbb{E}(Q(T)))^2,$ we obtain the following \CCV~bound
\begin{eqnarray} \label{q-bd-no-assump_cbwlc}
    \mathbb{E}Q(T)\leq O\bigg(\max(\sqrt{VT},\sqrt{V}(K TU_T)^{1/4},  (VK U_T)^{1/4}\sqrt{T} , K^{3/8}V^{1/4} T^{3/8} U_T^{3/8}, \sqrt{KU_TT}, B_T)\bigg).
\end{eqnarray}
Choosing $V = \sqrt{KTU_T} + B_T,$ we conclude $ \mathbb{E} \mathsf{Regret}_T = O(\sqrt{K}T^{3/4}U_T^{1/4}+B_T), ~\mathbb{E}\mathsf{CCV}_T = O(\sqrt{K}T^{3/4}U_T^{1/4} + \sqrt{TB_T}),$ where we have used the fact that $B_T \leq T.$
\section{Auxilliary Lemmas}
\begin{lemma}
\label{lemma:quadratic}
    If $x^2 \leq ax + b$ with $a,b \geq 0$ then it implies that $x \leq a + \sqrt{b}$.
\end{lemma}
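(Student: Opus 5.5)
The plan is to treat the inequality $x^2 \le ax + b$ as a quadratic constraint on $x$ and read off the upper root. First dispose of the trivial case $x \le 0$: there $x \le 0 \le a + \sqrt{b}$ because $a,b \ge 0$. So assume $x > 0$ from now on.

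For $x>0$, rewrite the hypothesis as $x^2 - ax - b \le 0$. The polynomial $p(y)=y^2-ay-b$ has real roots
\[
y_\pm = \frac{a \pm \sqrt{a^2+4b}}{2},
\]
since $a^2+4b\ge 0$. Because $p$ is an upward-opening parabola, $p(x)\le 0$ forces $y_- \le x \le y_+$. In particular
\[
x \le \frac{a+\sqrt{a^2+4b}}{2}.
\]

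Next, bound the square root using subadditivity, $\sqrt{u+v}\le\sqrt u+\sqrt v$ for $u,v\ge0$. This gives $\sqrt{a^2+4b}\le a+2\sqrt b$, and hence
\[
x \le \frac{a + a + 2\sqrt{b}}{2} = a+\sqrt{b},
\]
which is the claimed bound.

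There is no real obstacle here. The only points needing care are the sign case $x\le 0$, since the bound must hold for any real $x$ satisfying the hypothesis, and checking that the discriminant is nonnegative so that the root formula applies; both follow directly from $a,b\ge 0$.
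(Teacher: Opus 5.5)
Your proof is correct and follows essentially the same route as the paper's: the paper completes the square to get $(x-\tfrac{a}{2})^2 \le \tfrac{a^2}{4}+b$, which is the same as your upper-root bound, and then applies the same subadditivity $\sqrt{u+v}\le\sqrt{u}+\sqrt{v}$. Your separate treatment of $x\le 0$ is harmless but unnecessary, since the upper-root bound already holds for every real $x$ satisfying the hypothesis.
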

\begin{proof}
    \begin{align*}
        x^2 \leq ax + b \implies \bigg(x-\frac{a}{2}\bigg)^2 \leq \frac{a^2}{4} + b \implies x-\frac{a}{2} \leq \sqrt{\frac{a^2}{4} + b} \implies x \leq \frac{a}{2} + \sqrt{\frac{a^2}{4} + b}
    \end{align*}

Further note that,
\begin{align*}
    \sqrt{\frac{a^2}{4} + b} \leq \frac{a}{2} + \sqrt{b}
\end{align*}
as $\sqrt{x + y} \leq \sqrt{x} + \sqrt{y},$ for $x,y \geq 0$.
Putting everything together, we get,
\begin{align*}
    x^2 \leq ax + b \implies x \leq a + \sqrt{b}.
\end{align*}
\end{proof}
\end{document}

%% file: abstract.tex
\begin{abstract}
We study constrained contextual bandits (\CCB) with adversarially chosen contexts, where each action yields a random reward and incurs a random cost. We adopt the standard realizability assumption: conditioned on the observed context, rewards and costs are drawn independently from fixed distributions whose expectations belong to known function classes. We consider the continuing setting, in which the algorithm operates over the entire horizon even after the budget is exhausted. In this setting, the objective is to simultaneously control regret and cumulative constraint violation. Building on the seminal $\mathsf{SquareCB}$ framework of \citet{foster2018practical}, we propose a simple and modular algorithmic scheme that leverages online regression oracles to reduce the constrained problem to a standard unconstrained contextual bandit problem with adaptively defined surrogate reward functions. In contrast to most prior work on \CCB, which focuses on stochastic contexts, our reduction yields improved guarantees for the more general adversarial context setting, together with a compact and transparent analysis. 
\end{abstract}

%% file: intro.tex
\section{Introduction}
Contextual Bandit ($\mathsf{CB}$) is a standard online learning framework for studying decision-making under uncertainty with side information. They can be viewed as a natural generalization of the classical Multi-Armed Bandit (MAB) problem where some additional side information conveying an implicit actions-to-rewards mapping is made available to the learner in the form of \emph{contexts}. \CB~has found widespread applications in practice, including personalized recommendation systems, crowdsourcing platforms, and large-scale clinical trials. In a typical setting, the learner interacts with the environment over multiple rounds; in each round, the learner observes a context, selects an action, and subsequently receives a reward. The goal of the learner is to maximize the cumulative reward over a given horizon by learning an optimal context-to-action mapping that appropriately balances exploration with exploitation. This problem has been extensively studied in the literature, and a variety of practical and efficient algorithms with near-optimal performance guarantees have been developed \citep{beygelzimer2010optimal,agarwal2014taming,agarwal2016making}.  

Although contextual bandits are widely used in practice, many applications cannot be adequately captured by a single scalar performance metric. Policies are often subject to additional operational limitations in the form of long-term constraints, which are not modeled within the vanilla \CB~framework. For instance, recommendation systems must optimize ad placements while respecting advertisers’ budget constraints. In crowdsourcing applications, task allocation must be performed while ensuring fairness and compliance with labour regulations. In clinical trials, clinicians are required to design patient-specific treatment plans while managing limited medical resources \citep{tewari2017ads, bouneffouf2019survey}. Similarly, large language models must generate useful responses to user queries while adhering to safety and alignment requirements \citep{monea2024llms}.

To address these additional requirements, the study of constrained contextual bandits ($\mathsf{CCB}$) has recently been initiated, giving rise to a growing body of literature that already contains a number of compelling results \citep{badanidiyuru2014resourceful, agrawal2014bandits}. In this problem, the objective is to maximize the cumulative rewards while satisfying a long-term cost constraint. To maintain analytical tractability, existing works on \CCB ~typically assume that contexts are independent and stationary, and are sampled in an i.i.d.\ manner from an underlying distribution. However, in many practical applications, successive contexts may be correlated and may exhibit distributional shift (\emph{e.g.,} in recommender systems, user preferences may vary over the course of a day, or in clinical trials, patient demographics and case histories may evolve over time).

A standard approach to addressing these challenges is to adopt a more robust adversarial model, which places no assumptions on the context-generating process and therefore remains effective even when the environment changes its behaviour over time. Such an adversarial formulation also enables the study of settings in which contexts are adaptively chosen based on the learner’s past actions, which arises in competitive environments such as online auctions or security games \citep{harris2024regret,hu2025learning}.

However, extending contextual bandits from stochastic to adversarial contexts introduces significant technical challenges in both algorithm design and analysis \citep[Remark 2.2]{slivkins2023contextual}. To recall the state of the art, note that \citet{guo2024stochastic}, which is closely related to this work, relies heavily on a multi-step Lyapunov drift analysis to establish high-probability bounds on the virtual queue by modelling the queue evolution as a Markovian process. This approach, however, is inherently limited by the stochastic assumption, which enables the authors to amortize estimation errors over multiple rounds and thereby ensure the stability of the virtual queues. More recent works, such as \citet{guo2025stochastic}, make an even stronger assumption that the context distribution is known \emph{a priori}, which is then leveraged to derive tighter performance guarantees. Two concurrent works by \citet{han2023optimal} and \citet{slivkins2023contextual} consider the Contextual Bandit with Long-Term Budget Constraints. \citet{slivkins2023contextual} considers both positive and negative resource consumption (which they call the $\mathsf{CBwLC}$ problem), where they assume a known and strictly positive Slater constant and a large budget regime ($B_T = \Omega(T)$). \citet{han2023optimal} consider the case with hard stopping, but they only consider the case where $B_T = \Omega(T^{3/4})$.

In this paper, we study the \CCB~problem with adversarial contexts, directly building upon the unconstrained \SqCB~framework of \citet{foster2020beyond} to constrained settings with long-term constraints. Our algorithm uses an off-the-shelf online regression-based oracle under the realizability assumption \citep{foster2018practical}. Specifically, similar to \SqCB, we employ the online regression oracle to maintain running estimates of the mean reward and cost functions based on past observations. These estimates are then used to construct a surrogate objective, which is minimized via an inverse-gap-weighting (\IGW) policy with an adaptive learning rate (see Figure~\ref{flowchart} for a schematic). The resulting \IGW~policy carefully balances exploration, exploitation, and constraint satisfaction by building upon a regret decomposition scheme first introduced in \citet{sinha2024optimal} in the context of online convex optimization with long-term constraints.

Overall, our approach, given by Algorithm \ref{ccb2}, yields a unified, modular, and flexible algorithmic framework that enables a streamlined analysis of \CCB~through a single key inequality and offers improved bounds over the state-of-the-art. Our main contributions can be summarized as follows:

\begin{itemize}
	\item We consider constrained contextual bandits with adversarial contexts and offer improved performance bounds under various feasibility assumptions of the benchmark. For example, when we assume \textit{almost sure feasibility} with general costs or expected feasibility with non-negative costs, we achieve \Regret~and \CCV~bounds of the order $\Tilde{O}(\sqrt{TU_T})$ where $U_T$ is the cumulative squared loss of the regression oracle.  Previous works achieved bounds of the order $\Tilde{O}(T^{3/4}U_T^{1/4})$ assuming stochastic contexts, while the area of \CCB~with adversarial contexts remained underexplored. We also consider the case where strict feasibility in the form of Slater's constant $\epsilon$ is considered and provide some valuable insight on a conjecture by \citet{guo2024stochastic}. See Theorem \ref{thm:main} for the precise results.
	\item Our reduction scheme allows us to consider the long-term budget-constrained setting, \emph{a.k.a.} contextual bandits with knapsacks ($\mathsf{CBwK}$). We consider the \textit{continuing} case, where the learner incurs violations even after the budget is exhausted in contrast to the \textit{hard stopping} setting where the learner stops the moment the budget is exhausted. Importantly, we eliminate many of the restrictive assumptions in prior work, including \emph{e.g.,} not assuming strict feasibility in the form of Slater's condition or restricting to the large budget regime $B_T = \Omega(T)$ \citep{slivkins2023contextual}.  We also consider the case of contextual bandits with linear constraints $\mathsf{CBwLC}$ in the stochastic contexts regime, where, unlike $\mathsf{CBwK}$, the costs are not assumed to be non-negative.  In both settings, our results are complementary to those in \citet{slivkins2023contextual} as we provide non-trivial results in the small-budget regime where $B_T=o(T)$. 
    
	\item On the conceptual side, we give a streamlined treatment to the \CCB~problem by reducing it to an instance of the unconstrained contextual bandit problem with a regression oracle. In particular, we derive a general regret decomposition inequality, Eqn.\ \eqref{reg-decomp-ultimate}, making use of the fundamental $\mathsf{SquareCB}$ framework. All our \Regret~and \CCV~bounds under various settings follow from this single inequality after a few lines of algebra. 
\end{itemize}
We refer the reader to Section \ref{comparison} in the Appendix for a comprehensive comparison of our results with the state-of-the-art.

%% file: problem_formulation.tex
\section{Problem Formulation}
 We consider a budget-constrained contextual bandit problem as described next. 
At the beginning of each round $t \geq 1$, the learner observes a context $x_t \in \mathcal{X},$ where $\mathcal{X}$ is the set of all possible contexts. The contexts could be chosen adversarially at each round. Upon observing the context $x_t$, the learner selects an action $a_t \in [K]$ from the set of $K$ possible actions, also referred to as \emph{arms}. The action $a_t$ could be randomized. Subsequently, the learner receives a random reward $f_t(x_t, a_t) \in [-1, 1]$ and incurs a random cost $g_t(x_t, a_t) \in [-1, 1]$.  Given the context $x_t$, the rewards and costs are assumed to be drawn independently from a distribution whose expected values are characterized below. 
\begin{assumption}[Realizability]
\label{assum:realizability}
Let $\mathcal{F}$ and $\mathcal{G}$ be two predefined function classes comprising functions that map each ($\mathsf{context}$, $\mathsf{action}$) tuple to the interval $[-1,1].$  Then the realizability assumption states that there exist functions $f^\star \in \mathcal{F}$ and $g^\star \in \mathcal{G}$ such that $\mathbb{E}[f_t(x_t, a)|x_t=x] = f^\star(x,a)$ and $\mathbb{E}[g_t(x_t, a)|x_t=x] = g^\star(x,a), \forall x \in \mathcal{X}, a \in [K], t\geq 1$. 
\end{assumption}
$\mathcal{F}$ and $\mathcal{G}$ could be user-specified general function classes that may be flexibly implemented with, \emph{e.g.,} decision trees, kernels, neural nets, etc. 
To keep the exposition notationally cleaner, we only consider the exact realizability condition as defined above. Extension of the model misspecification leading to approximate realizability will be straightforward in the sequel. The functions $f^\star$ and $g^\star$ are not known \emph{a priori} and must be learned through past experience. A (randomized) policy $\pi: \mathcal{X} \to \Delta([K])$ is a mapping which maps each context to a probability distribution over the actions. Let $\Pi$ denote the set of all policies. The goal of the learner is to perform \emph{close} to an optimal offline stationary benchmark policy $\pi^\star$ that maximizes the cumulative rewards while satisfying the cost constraints at each round, \emph{i.e.,}
\begin{eqnarray} \label{problem_statement}
   \pi^\star = \arg \max_{\pi \in \Pi} \quad & \sum_{t=1}^T \mathbb{E}_\pi\big(f_t(x_t, a_t)\big), ~~~~
    \text{s.t.} \quad &  \mathbb{E}_\pi \big(g_t(x_t, a_t)\big) \leq 0, ~~ 1\leq t \leq T.
\end{eqnarray}
See Section \ref{metrics} for precise definitions of the performance metrics for an online policy.
In the above, we have insisted that the benchmark $\pi^\star$ satisfies the constraints at \emph{every} round while the online policy (approximately) satisfies the budget constraint in the long-term. This is the standard assumption in the constrained learning literature \citep{guo2025stochastic, slivkins2022efficient, sinha2024optimal}. In Section \ref{sec:lt_budget}, we will consider a special case with non-negative costs where the benchmark $\pi^\star$ is only required to satisfy a given budget constraint $B_T$ over the entire horizon of length $T$, \emph{i.e.,}
\begin{eqnarray} \label{problem_statement2}
	 \pi^\star=  \arg\max_{\pi \in \Pi} \quad & \sum_{t=1}^T \mathbb{E}_\pi\big(f_t(x_t, a_t)\big), ~~~~
    \text{s.t.} \quad &  \sum_{t=1}^T \mathbb{E}_\pi \big(g_t(x_t, a_t)\big) \leq B_T, 
\end{eqnarray}
See Section \ref{benchmarks} for a detailed discussion on the benchmarks.
\subsection{Online Regression Oracle ($\mathcal{O}_{\textrm{sq}}$)}

The learner interacts with the function classes $\mathcal{F}$ and $\mathcal{G}$ through an off-the-shelf online regression oracle $\mathcal{O}_{\mathrm{sq}}$ over $T$ rounds. At each round $t \geq 1$, the oracle takes the context $x_t$ as input and produces predictions for the reward and cost associated with each action. We denote the predicted vectors by $\big(\hat{f}_t(x_t, a), a \in [K]\big)$ and $\big(\hat{g}_t(x_t, a), a \in [K]\big)$, respectively. Let $a_t \in [K]$ denote the arm selected by any (possibly randomized) policy at round $t$, which results in a random reward $f_t(x_t, a_t)$ and a random cost $g_t(x_t, a_t)$. Recall that under the realizability assumption, we have for all $(x_t, a_t):$
\begin{eqnarray} \label{realizability-implies}
\mathbb{E}\!\left[f_t(x_t, a_t)|x_t, a_t\right] = f^\star(x_t, a_t)
\quad \text{and} \quad
\mathbb{E}\!\left[g_t(x_t, a_t)|x_t, a_t\right] = g^\star(x_t, a_t).
\end{eqnarray}
The quality of the predictions produced by $\mathcal{O}_{\mathrm{sq}}$, measured in terms of squared loss, is assumed to satisfy the following guarantees \citep[Definition 3]{foster2023foundations}:
\begin{equation} \label{oracle-guarantee}
\mathbb{E} \sum_{t=1}^T \big(\hat{f}_t(x_t, a_t) - f^\star(x_t, a_t)\big)^2 \leq U_T,
\qquad
\mathbb{E} \sum_{t=1}^T \big(\hat{g}_t(x_t, a_t) - g^\star(x_t, a_t)\big)^2 \leq U_T,
\end{equation}
where the error bound $U_T$ grows sub-linearly with $T$\footnote{Note that although at each round, the oracle produces estimates for all actions, in \eqref{oracle-guarantee}, its quality is measured only with respect to the action of the policy $a_t,$ which may, in turn, depend on the estimated values of all actions. }. In \eqref{oracle-guarantee}, the expectations are taken with respect to both the internal randomness of the online regression oracle $\mathcal{O}_{\mathrm{sq}}$ and the randomness of the policy (which may depend on the oracle outputs $\hat{f}_t$ and $\hat{g}_t$). Under Assumption \ref{assum:realizability}, the bounds in \eqref{oracle-guarantee} can be achieved by any no-regret online learning algorithm competing against the respective function classes under the squared loss function; see \citep[Lemma~6]{foster2023foundations}. The value of $U_T$ depends on the complexity of the function classes. For example, if both $\mathcal{F}$ and $\mathcal{G}$ are finite and the oracle is implemented using the Exponential Weights algorithm, then one may take
\[
U_T = O\!\left(\log\!\big(\max\{|\mathcal{F}|, |\mathcal{G}|\}\big)\right),
\]
as shown in \citet[Proposition~3]{foster2023foundations}. Similarly, for $d$-dimensional linear function classes, using the classic Vovk--Azoury--Warmuth forecaster or Online Newton Step (ONS) yields
\[
U_T = O(d \log T)
\]
under standard regularity conditions \citep[Theorem~7.34]{orabona2019modern}.

In practice, the regression oracle may be implemented using an artificial neural network trained with online gradient descent. Throughout the remainder of the paper, we treat $\mathcal{O}_{\mathrm{sq}}$ as a given black box and focus on designing the online learner that invokes this oracle.

\subsection{Offline Benchmarks for Regret Computation} \label{benchmarks}
In this paper, we consider several classes of offline benchmark policies used to gauge the performance of an online policy. These benchmarks differ in how they enforce the budget constraints, and this has been considered in previous work. In this paper, we provide a unified treatment through a regret decomposition inequality.  
\begin{definition}[Feasible in Expectation]
\label{assum:in_expect}
  A stationary policy $\pi^\star : \mathcal{X} \to \Delta_K$ is called \emph{feasible in expectation} if $\pi^\star$ yields non-positive cost in expectation every round, \emph{i.e.,} 
\begin{eqnarray*}
	\mathbb{E}_{a \sim \pi^\star(x_t)} g_t(x_t, a) \leq 0, ~~ \forall t.
\end{eqnarray*}
\end{definition}

\begin{definition}[Feasible in Expectation with Slater's condition]
\label{assum:in_expect-slater}
  A stationary policy $\pi^\star : \mathcal{X} \to \Delta_K$ is called \emph{feasible in expectation} with Slater parameter $\epsilon>0$ if $\pi^\star$ yields non-positive cost in expectation every round with an $\epsilon$ slack, \emph{i.e.,} 
\begin{eqnarray*}
	\mathbb{E}_{a \sim \pi^\star(x_t)} g_t(x_t, a) \leq -\epsilon, ~~ \forall t.
\end{eqnarray*}
\end{definition}

\begin{definition}[Almost Surely Feasible]
\label{assum:almost_sure}
    A stationary policy $\pi^\star : \mathcal{X} \to \Delta_K$ is called \emph{almost surely feasible} if $\pi^\star$ yields non-positive cost almost surely every round, \emph{i.e.,} 
\begin{eqnarray*}
	\pi^\star(x_t, a) >0 \implies g_t(x_t, a) \leq 0, ~~\forall a, t. 
\end{eqnarray*}
\end{definition}
\begin{definition}[Long-term Budget Feasible]
\label{assum:lt_budget}
    A stationary policy $\pi^\star : \mathcal{X} \to \Delta_K$ is called \emph{long-term budget feasible} for a total budget of $B_T \geq 0$ if 
\begin{eqnarray*}
	\sum_{t=1}^T \mathbb{E}_{a \sim \pi^\star(x_t)} g_t(x_t, a) \leq B_T.
\end{eqnarray*}
In other words, a long-term budget-feasible benchmark satisfies a given cost constraint in expectation over the entire horizon. 
\end{definition}

\begin{remark}
    Clearly, the relative strengths of the benchmarks are related as follows:
    Long-term Budget Feasible $\supseteq$ Feasible in Expectation $\supseteq$ Almost surely Feasible. We also have Feasible in Expectation $\supseteq$ Feasible in Expectation with Slater's condition. We will see that, as expected, relatively weaker benchmarks lead to stronger performance bounds. 
\end{remark}



\subsection{Performance Metrics} \label{metrics} 
Since any online policy must learn the reward and cost functions sequentially through past experiences, it will necessarily be sub-optimal compared to an offline benchmark $\pi^\star$ that knows $f^\star$ and $g^\star$ \emph{a priori}. The sub-optimality gap of any online policy in terms of the cumulative reward and constraint satisfaction is captured by two quantities, \Regret ~and \CCV, as defined next.  
\paragraph{Regret:} We compare the expected cumulative reward obtained by any online policy with that of the best fixed policy in hindsight that solves the offline problem \eqref{problem_statement} for a given context sequence $x_{1:T}$. 
Specifically, 
let \OPT~denote the cumulative reward accrued by the benchmark. 
The regret of any online policy that takes the (randomized) actions $\{a_t\}_{t=1}^T$ is defined as:
\begin{align}
    \mathsf{Regret}_T := \mathsf{OPT} - \left[\sum_{t=1}^T f_t(x_t, a_t)\right]. 
\end{align}

\paragraph{Cumulative Constraint Violation (\CCV):} An online policy need not satisfy the constraint at every round as the expected cost function is unknown. 
The cumulative constraint violation (\CCV) of any policy that takes the actions $\{a_t\}_{t=1}^T$ is defined as follows:
\begin{align}
    \mathsf{CCV}_T := \left[\sum_{t=1}^T g_t(x_t, a_t)\right]. 
\end{align}
Positive values of \CCV~capture the extent to which the constraints are violated by the online policy in the long run. Clearly, \Regret~and~\CCV~are random variables with the randomness coming from the rewards, costs, and the randomness of the policy.  Our objective is to design online policies that minimize expectation of \Regret~and \CCV~simultaneously. 

%% file: preliminaries.tex
\section{Preliminaries} \label{algo_analysis}
We extend the \SqCB~algorithm of \citet{foster2020beyond}, originally developed for unconstrained contextual bandits, to the constrained contextual bandit setting in a black-box manner. The (loss version of) vanilla \SqCB~subroutine is summarized in Algorithm~\ref{ccb} (reward is negative loss). It employs an online regression oracle $\mathcal{O}_{\text{sq}}$ to estimate each arm's losses from observed contexts, and then feeds these estimates into the classic Inverse Gap Weighting (IGW) policy. The IGW policy, described in Definition \ref{igw-def}, is designed to carefully balance exploration, exploitation, and estimation error, thereby achieving favourable regret guarantees.

\begin{algorithm}[H]
\caption{\SqCB: Contextual Bandits with Regression Oracles}
\label{ccb}
\begin{algorithmic}
  \REQUIRE Online regression oracle $\mathcal{O}_{\text{sq}}$ and parameter $\gamma > 0$

  \FOR{$t = 1, \dots, T$}
    \STATE Receive context $x_t$.
    \STATE Ask $\mathcal{O}_{\text{sq}}$ to predict the loss for each action,
           obtaining $\widehat{y}_t(1), \dots, \widehat{y}_t(K)$.
    \STATE Compute $p_t \in \Delta(K)$ as follows:
    \STATE \hspace{3\algorithmicindent}
      $p_t(a) \gets
        \dfrac{1}{\lambda + 2\gamma\bigl(\widehat{y}_t(a)-\min_b \widehat{y}_t(b)\bigr)}.$

    \STATE Sample $a_t \sim p_t$, observe $\ell_t(a_t)$, and feed
           $((x_t, a_t), \ell_t(a_t))$ to $\mathcal{O}_{\text{sq}}$.
  \ENDFOR
\end{algorithmic}
\end{algorithm}

\begin{figure*}
\centering
	\includegraphics[scale=0.7]{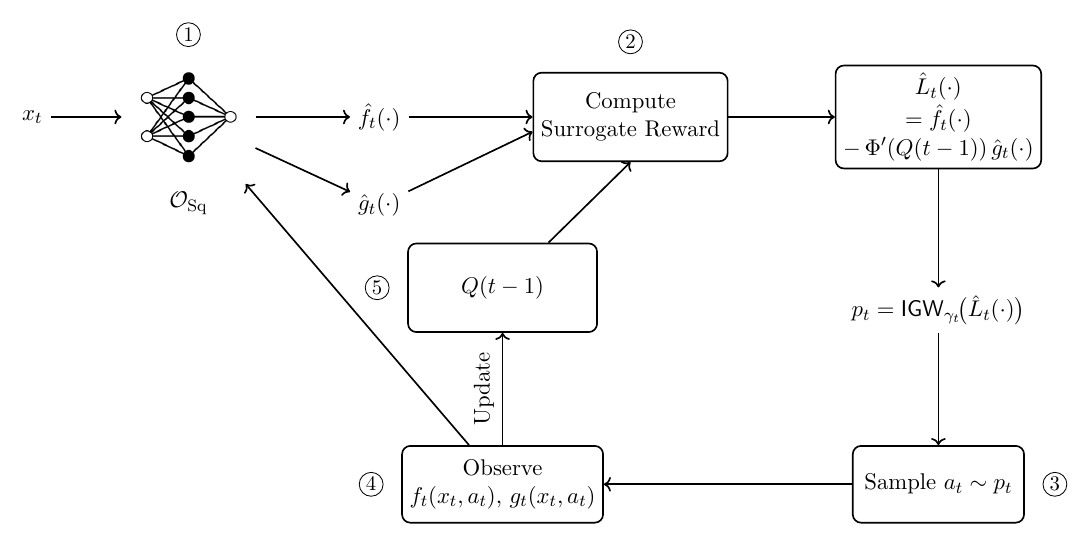}
	\caption{\small{A schematic of the proposed algorithmic scheme for the constrained contextual bandit (\CCB) problem. The numbers within the circles show the sequence of operations performed at any round $t \geq 1$. The variable $Q(t)$ denotes the \CCV~after round $t$ and \IGW~denotes inverse gap weighting.} 
 }
	\label{flowchart}
\end{figure*} 
\begin{definition}[\textsc{Inverse Gap Weighting} \citep{foster2020beyond}] \label{igw-def}
	Given any vector $\hat{\bm{v}} \in \mathbb{R}^K,$ the  \textsc{Inverse Gap Weighting} distribution $p = \mathsf{IGW}_\gamma(\hat{\bm{v}})$ with parameter $\gamma \geq 0$ is defined as
	\[p(a) =  \frac{1}{\lambda + 2 \gamma (\hat{v}(a) - \hat{v}(a^\star))}, ~~ a \in [K],\]
	where $a^\star = \arg \min_{a \in [K]}\hat{v}(a)$ is the greedy action, and $\lambda \in [1, K]$ is chosen such that $\sum_a p(a)=1.$
\end{definition}
The following algebraic result plays a central role in the analysis of the \textsf{SquareCB} algorithm and serves as the starting point for the design and analysis of the \CCB~algorithm presented in this paper.

\begin{lemma} \label{SqCBthm}
	Fix any arbitrary $\hat{\bm{v}} \in \mathbb{R}^K$  and the parameter $\gamma >0$. Then, for the probability distribution $p =\textsf{IGW}_{\gamma}(\hat{\bm{v}})$ given in Definition \ref{igw-def}, it holds that for any vector $\bm{v} \in \mathbb{R}^K$ and any distribution $\bm{\mu} \in \Delta_{K},$ we have 
	\begin{eqnarray} \label{SqCB}
		\langle \bm{v}, \bm{p}\rangle - \langle \bm{v}, \bm{\mu}\rangle  \leq \frac{K}{2\gamma}+ \gamma\sum_{a=1}^K p(a) (\hat{v}(a)- v(a))^2 = \frac{K}{2\gamma} + \gamma\mathbb{E}_{a \sim \bm{p}}(v(a)-\hat{v}(a))^2. 
	\end{eqnarray}
\end{lemma}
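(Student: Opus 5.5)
Since the left-hand side of \eqref{SqCB} is linear in $\bm{\mu}$ and the right-hand side does not depend on $\bm{\mu}$, it suffices to prove the bound when $\bm{\mu}$ is a point mass at an arbitrary arm $b \in [K]$. The general case then follows by averaging over $b \sim \bm{\mu}$.

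\textbf{Step 1 (decomposition).} Write $\Delta(a) := v(a) - \hat{v}(a)$ and let $a^\star$ be the greedy arm of Definition \ref{igw-def}. Since $\sum_a p(a)=1$, I decompose
\[
\langle \bm{v}, \bm{p}\rangle - v(b) \;=\; \sum_a p(a)\bigl(\hat{v}(a)-\hat{v}(b)\bigr) \;+\; \sum_{a\neq b} p(a)\Delta(a) \;-\; \bigl(1-p(b)\bigr)\Delta(b).
\]

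\textbf{Step 2 (exploit the IGW form).} From $p(a) = 1/(\lambda + 2\gamma(\hat{v}(a)-\hat{v}(a^\star)))$ we get $\hat{v}(a)-\hat{v}(a^\star) = (1/p(a) - \lambda)/(2\gamma)$ for every $a$. Writing $\hat{v}(a)-\hat{v}(b) = (\hat{v}(a)-\hat{v}(a^\star)) - (\hat{v}(b)-\hat{v}(a^\star))$, the first sum evaluates exactly to
\[
\frac{K-\lambda}{2\gamma} - \frac{1/p(b)-\lambda}{2\gamma} \;=\; \frac{K - 1/p(b)}{2\gamma}.
\]

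\textbf{Step 3 (AM--GM on the error terms).} This is the step needing care, because the constants must be tuned so that no slack beyond $K/(2\gamma)$ survives. For $a\neq b$ I use
\[
p(a)\Delta(a) \le \gamma p(a)\Delta(a)^2 + \frac{p(a)}{4\gamma},
\]
whose constants sum to $(1-p(b))/(4\gamma)$. For the comparator arm I use
\[
-(1-p(b))\Delta(b) \le \gamma p(b)\Delta(b)^2 + \frac{(1-p(b))^2}{4\gamma p(b)}.
\]
Together these produce exactly the term $\gamma\sum_a p(a)\Delta(a)^2$ appearing in \eqref{SqCB}.

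\textbf{Step 4 (collect constants).} Adding the constants from Steps 2 and 3 and writing $p=p(b)$ gives
\[
\frac{K}{2\gamma} + \frac{1}{4\gamma}\Bigl[-\frac{2}{p} + (1-p) + \frac{(1-p)^2}{p}\Bigr] \;=\; \frac{K}{2\gamma} - \frac{1+p}{4\gamma p} \;\le\; \frac{K}{2\gamma}.
\]
This yields \eqref{SqCB}. The main obstacle is the choice of AM--GM weights in Step 3: a naive symmetric split over all $a$ (including $b$) leaves an extra $1/(2\gamma)$. The key is to weight the comparator term by $p(b)$, so that the negative $-1/(2\gamma p(b))$ coming from the IGW gap identity absorbs the $1/p(b)$ blow-up.
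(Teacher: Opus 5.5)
Your proof is correct, and it takes a slightly different and tighter route than the argument the paper relies on. The paper does not prove the lemma itself; it defers to Foster--Rakhlin. Their standard argument splits the regret against arm $b$ into three pieces:
\begin{itemize}
\item the IGW gap term $\sum_a p(a)(\hat v(a)-\hat v(b))$, evaluated with the same identity $1/p(a)=\lambda+2\gamma(\hat v(a)-\hat v(a^\star))$ that you use;
\item the on-policy error $\sum_a p(a)\Delta(a)$;
\item the comparator error $-\Delta(b)$.
\end{itemize}
AM--GM is then applied to the last two separately, with weight $\gamma/2$ each.

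In that bookkeeping, the $1/(2\gamma p(b))$ from the comparator term cancels exactly against the $-1/(2\gamma p(b))$ from the gap identity. The on-policy term, however, leaves an uncancelled $1/(2\gamma)$. That route therefore certifies only $(K+1)/(2\gamma)\le K/\gamma$, not the $K/(2\gamma)$ written in the lemma.

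You first merge the two occurrences of $\Delta(b)$ into $-(1-p(b))\Delta(b)$, so arm $b$ is not charged twice, and then weight its AM--GM by $p(b)$. This gives the constant $K/(2\gamma)-(1+p(b))/(4\gamma p(b))$. That actually justifies the constant the paper states and later uses when choosing $\gamma_t$.

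The three-term split buys modularity: each piece has a clear meaning, and the argument transfers verbatim to DEC-style analyses. Yours buys the sharp constant. For the paper's downstream rates, the difference only affects absolute constants.

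Your reduction from general $\bm{\mu}$ to a point mass by linearity is fine. Equivalently, you could take $b$ to be a minimizer of $v$.
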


In the online learning setting, the LHS of \eqref{SqCB} may be interpreted as the incremental regret for learning the cost vector $\bm{v}$ and the second term on the RHS may be interpreted as the expected estimation error while using the \IGW~policy. See \citet[Proposition 9]{foster2023foundations} for a proof of Lemma \ref{SqCBthm}. 

\section{Constrained Contextual Bandits via the Regret Decomposition Inequality} 
In this section, we present a simple reduction scheme, given by Algorithm \ref{ccb2}, that directly invokes the \SqCB~algorithm, described in Algorithm~\ref{ccb}, to address the constrained contextual bandit problem. In contrast to the unconstrained setting, where the oracle is trained solely on the loss functions, in the constrained setting, we train the online oracle using an adaptively defined \emph{surrogate reward function}. This surrogate reward linearly combines the reward and cost functions using coefficients that are chosen adaptively over time, as described in detail below. A schematic illustration of the overall algorithmic framework is given in Figure~\ref{flowchart}. 
\begin{algorithm}[H]
\caption{Constrained Contextual Bandits with Regression Oracles}
\label{ccb2}
\begin{algorithmic}
 \REQUIRE A non-decreasing convex Lyapunov function $\Phi(\cdot)$
 \FOR{$t = 1, \dots, T$}
\STATE Invoke Algorithm \ref{ccb} with $\hat{y}_t = \hat{L}_t$ given by Eqn. \eqref{surr-cost-def} with the parameter $\gamma_t$ given by Eqn.\ \eqref{gamma-param}.  
\ENDFOR
\end{algorithmic}
\end{algorithm}

While our approach bears some resemblance to the LOE2D framework of \citet{guo2024stochastic}, it differs in several technical aspects, leading to stronger theoretical guarantees under significantly weaker assumptions and a compact, straightforward proof. From a purely algorithmic point of view, while the exploration parameter $\gamma_t$ of the \IGW~policy in this paper depends adaptively on all previous \CCV~variables (see Eqn.\ \eqref{gamma-param} below), the corresponding parameter in \citet{guo2024stochastic} depends only on the current \CCV~(see the $\beta_t$ update in \citet[Eqn (5)]{guo2024stochastic}.) This adaptive choice of the parameters leads to the regret decomposition inequality \eqref{reg-decomp-ultimate}, which, in turn, forms the starting point of our analysis for the adversarial contexts while improving upon the bounds of \citet{guo2024stochastic}. 

\paragraph{Derivation:}
Define the cumulative cost up to round $t$ by $Q(t),$ which satisfies the following recursion:  
\begin{eqnarray} \label{q-recur}
	Q(t) = Q(t-1) + g_t(x_t, a_t).
\end{eqnarray}
 
Let $\Phi: \mathbb{R}_+ \to \mathbb{R}_+$ be a twice differentiable, convex Lyapunov function. We assume $\Phi''(\cdot)$ to be monotone. Using a second-order Taylor series expansion for $\Phi$, we have:  
\begin{eqnarray*}
	\Phi(Q(t)) = \Phi(Q(t-1)) + \Phi'(Q(t-1)) g_t(x_t, a_t) + \frac{1}{2}\Phi''(\zeta) c_t^2,
	\end{eqnarray*}
	for some $\zeta$ that lies between $Q(t-1)$ and $Q(t).$ 
	Using the monotonicity and non-negativity of $\Phi''(\cdot),$ and the fact that $g_t^2 \leq 1,$ we can bound the increase in the Lyapunov function on round $t$ as 
	\begin{eqnarray} \label{lyapunov-incr}
		\Phi(Q(t)) - \Phi(Q(t-1)) \leq \Phi'(Q(t-1)) g_t(x_t, a_t) + \frac{1}{2}(\Phi''(Q(t))+ \Phi''(Q(t-1))),
	\end{eqnarray}
    where in the last line, we have used the monotonicity of $\Phi'', $ which leads to the bound $\Phi''(\zeta) \leq \max (\Phi''(Q(t)), \Phi''(Q(t-1))) \leq \Phi''(Q(t))  + \Phi''(Q(t-1)).$

Next, adding $f_t(x_t, a^\star) - f_t(x_t, a_t)$ to both sides of inequality \eqref{lyapunov-incr}, where $a^\star$ is a randomized action following an arbitrary stationary policy $\pi^\star : \mathcal{X} \mapsto \Delta_K$ 
\begin{eqnarray} \label{drift-ineq1}
	\Phi(Q(t)) - \Phi(Q(t-1))+ f_t(x_t, a^\star) - f_t(x_t, a_t)  &\leq& f_t(x_t, a^\star) - \Phi'(Q(t-1)) g_t(x_t, a^\star)\nonumber\\
	&& - \bigg(f_t(x_t, a_t) - \Phi'(Q(t-1)) g_t(x_t, a_t)\bigg)\nonumber\\
	&& \frac{1}{2}(\Phi''(Q(t))+ \Phi''(Q(t-1))) \nonumber \\
	&&+ \Phi'(Q(t-1)) g_t(x_t, a^\star),
\end{eqnarray}
 
 
 \subsection{Surrogate Reward Functions}
 
 Let $\{\mathcal{F}_\tau\}_{\tau \geq 1}$ denote the natural filtration of the sequence of random variables observed before making decisions on each round, \emph{i.e.,} $\mathcal{F}_{t-1} = \sigma\big(\{x_{\tau+1}, f_{\tau}, g_\tau, a_{\tau}\}_{\tau=1}^{t-1}\big), t> 1$. Choosing the benchmark policy $\pi^\star$ to be any feasible in expectation policy (Definition \ref{assum:in_expect}), and taking the conditional expectation (given $\mathcal{F}_{t-1}$) of both sides of Eqn.\ \eqref{drift-ineq1} with respect to the randomness of the reward and cost functions and the randomness of the online and the benchmark policies, it follows that
  \begin{eqnarray} \label{reg-decomp-ineq1}
 	\mathbb{E}(\Phi(Q(t))|\mathcal{F}_{t-1}) - \Phi(Q(t-1))+ \langle \bm{f}^\star(x_t), \bm{\pi^\star}(x_t) - \bm{\pi}_t(x_t)\rangle \nonumber \\
    \leq \langle \bm{L}^\star_t(x_t),\bm{\pi}^\star(x_t) -\bm{\pi}_t(x_t) \rangle 
 	 + \frac{1}{2}(\Phi''(Q(t))+ \Phi''(Q(t-1))),
 \end{eqnarray} 
 where, in the above, we have defined the \emph{target surrogate} function as:
\begin{eqnarray} \label{est-surr-cost}
	L^\star_t(x_t, a) = f^\star(x_t,a) - \Phi'(Q(t-1)) g^\star(x_t,a), ~~ a \in [K], 
\end{eqnarray}
and the \emph{estimated surrogate} function $L_t(x_t, \cdot): [K] \to \mathbb{R}$ as:
\begin{eqnarray} \label{surr-cost-def}
	\hat{L}_t(x_t, a) = \hat{f}_t(x_t,a) - \Phi'(Q(t-1)) \hat{g}_t(x_t,a), ~~ a \in [K]. 
\end{eqnarray}
In Eqn.\ \eqref{drift-ineq1}, we have used the feasibility of the policy $\pi^\star$ which implies that $\mathbb{E}_{a^\star \sim \pi^\star} g_t(x_t,a^\star) \leq 0$.

\subsection{Bounding the \Regret~of the Surrogate Reward functions} \label{surr-reg-bd}

Using Lemma \ref{SqCBthm}, the one-step regret of the surrogate reward can be further upper-bounded as:
\begin{eqnarray} \label{reg-bd-meth2-1}
	 && \langle \bm{L}_t^\star(x_t), \bm{\pi}^\star(x_t) \rangle - \langle \bm{L}_t^\star(x_t), \bm{\pi}_t(x_t) \rangle  \nonumber\\
	  &\stackrel{(a)}{\leq} & \frac{K}{2\gamma_t} + 2\gamma_t \bigg(\mathbb{E}_{a \sim \bm{\pi}_t(x_t)} (f^\star(x_t,a)-\hat{f}_t(x_t,a))^2 + z_t\mathbb{E}_{a \sim \bm{\pi}_t(x_t)} (g^\star(x_t,a)-\hat{g}_t(x_t,a))^2\bigg), \nonumber \\
\end{eqnarray}
where we define $z_{t} \equiv \max\big(1, \big(\Phi'(Q(t-1))\big)^2\big), ~ t\geq 1.$ Next, we choose the parameter $\gamma_t$ as 
\begin{eqnarray} \label{gamma-param}
	\gamma_t = \frac{1}{2z_t} \sqrt{\frac{K}{U_T}\sum_{\tau=1}^{t} z_\tau}, ~~ t \geq 1.
\end{eqnarray}
With this choice, the RHS of \eqref{reg-bd-meth2-1} simplifies to
\begin{eqnarray} \label{reg-decomp-bd}
	&&\langle \bm{L}_t^\star(x_t), \bm{\pi}^\star(x_t) \rangle - \langle \bm{L}_t^\star(x_t), \bm{\pi}_t(x_t) \rangle \leq  \sqrt{KU_T}\frac{z_t}{\sqrt{\sum_{\tau=1}^{t} z_\tau}} + \nonumber \\
	 && \sqrt{\frac{K \sum_{\tau=1}^{t} z_\tau}{U_T}} \bigg(\mathbb{E}_{a \sim \bm{\pi}_t(x_t)} (f^\star(x_t,a)-\hat{f}_t(x_t,a))^2 + \mathbb{E}_{a \sim \bm{\pi}_t(x_t)} (g^\star(x_t,a)-\hat{g}_t(x_t,a))^2 \bigg),
\end{eqnarray}
where, in the second term, we have used the fact that $z_t \geq 1.$ 
Summing up the above inequalities for $1 \leq t \leq t',$ it follows that
\begin{eqnarray*}
		\sum_{t=1}^{t'} \langle \bm{L}_t^\star(x_t), \bm{\pi}^\star(x_t) \rangle - \langle \bm{L}_t^\star(x_t), \bm{\pi}_t(x_t) \rangle  \leq 4 \sqrt{KU_T} \sqrt{\sum_{t=1}^{t'} z_t},
\end{eqnarray*}
where we have used the guarantees \eqref{oracle-guarantee} of the online regression oracle $\mathcal{O}_{\textrm{sq}}$ and the fact that for any non-negative sequence $\{z_t\}_{t \geq 1},$ we have 
\begin{eqnarray*}
	\sum_{t=1}^{t'} \frac{z_t}{\sqrt{\sum_{\tau=1}^{t'} z_\tau}} \leq \sum_{t=1}^{T}\int_{\sum_{\tau=1}^{t-1} z_\tau}^{\sum_{\tau=1}^t z_\tau} \frac{dx}{\sqrt{x}} = \int_{0}^{\sum_{t=1}^{t'} z_t} \frac{dx}{\sqrt{x}} = 2 \sqrt{\sum_{t=1}^{t'} z_t}.
\end{eqnarray*}
Finally, using the fact that $z_t \leq 1 + \Phi'(Q(t-1))^2,$ we obtain our final bound for the regret for learning the surrogate reward functions:
\begin{eqnarray} \label{final-reg-decomp}
	\textrm{Regret}_t' \equiv \sum_{\tau=1}^t \langle \bm{L}_\tau^\star(x_\tau), \bm{\pi}^\star(x_\tau) \rangle -  \langle \bm{L}_\tau^\star(x_\tau), \bm{\pi}_\tau(x_\tau) \rangle  \leq 4 \sqrt{KU_Tt } + 4 \sqrt{KU_T} \sqrt{\sum_{\tau=1}^{t-1} \Phi'(Q(\tau))^2}. 
\end{eqnarray}
Finally, taking the expectation of both sides of \eqref{reg-decomp-ineq1}, summing them up and using \eqref{final-reg-decomp} for bounding the RHS of the inequality, we conclude the fundamental \textbf{Regret Decomposition Inequality}, which will be the starting point of further analysis:
\begin{align} \label{reg-decomp-ultimate}
	&\mathbb{E}(\Phi(Q(t))) - \mathbb{E}(\Phi(Q(0))) + \mathbb{E} \textrm{Regret}_t (\pi^\star) \nonumber \\ &\leq 4 \sqrt{KU_Tt } + \sum_{\tau=1}^t \mathbb{E}\Phi''\big[(Q(\tau))]\big)+ 4 \sqrt{KU_T}\mathbb{E} \sqrt{\sum_{\tau=1}^{t-1}\bigg([\Phi'(Q(\tau))]^2\bigg)}.
\end{align}

%% file: results.tex
\section{Performance Bounds for Algorithm \ref{ccb2}}
In this section, we present the main theorem of the paper, which provides the \Regret~and \CCV~bounds for the proposed \CCB~policy across different benchmark classes. 

\begin{theorem}[Performance Bounds for Constrained Contextual Bandit]
\label{thm:main}
Under the realizability assumption (Assumption 1) and given an online regression oracle $\mathcal{O}_{sq}$ with cumulative squared error $U_T$ (given by Eqn.\ \eqref{oracle-guarantee}), Algorithm \ref{ccb2} achieves the following expected Regret and Cumulative Constraint Violation (\CCV) bounds:

\begin{enumerate}[label=(\alph*), leftmargin=2em]
    \item \textbf{Feasible in Expectation:} If the benchmark policy $\pi^\star$ is feasible in expectation (Def. \ref{assum:in_expect}), then choosing $\Phi(x)=\nicefrac{x^2}{V}$ with $V = \sqrt{KTU_T}$,  Algorithm \ref{ccb2} achieves:
    \begin{align*}
        \mathbb{E}\mathsf{Regret}_T = \mathcal{O}(\sqrt{K}T^{3/4}U_T^{1/4}), ~~
        \mathbb{E} \mathsf{CCV}_T = \mathcal{O}(\sqrt{K}T^{3/4}U_T^{1/4}). 
    \end{align*}

    \item \textbf{Slater's Condition:} If the benchmark $\pi^\star$ additionally satisfies Slater's condition with parameter $\epsilon > 0$ (Def. \ref{assum:in_expect-slater}), then with the same Lyapunov function $\Phi(\cdot),$ the average \CCV in part (a) can be improved to:
    \begin{equation*}
        \frac{1}{T}\sum_{\tau=1}^{T} \mathbb{E}\mathsf{CCV}_\tau = \mathcal{O}\left(\frac{\sqrt{KTU_T}}{\epsilon}\right).
    \end{equation*}
\begin{corollary}
An application of the Markov inequality shows that for any fixed, say $99\%$ of the total number of rounds, the \CCV~is at most $O(\frac{\sqrt{KTU_T}}{\epsilon}).$
\end{corollary}
    \item \textbf{Almost Surely Feasible:} If $\pi^\star$ is almost surely feasible (Def. \ref{assum:almost_sure}), then choosing $\Phi(x) = \exp(\lambda x)$ with $\lambda= (8 \sqrt{KU_T T})^{-1},$ Algorithm \ref{ccb2} yields:
    \begin{align*}
         \mathbb{E}\mathsf{Regret}_T \le O(\sqrt{KU_T T}), ~~
         \mathbb{E}\mathsf{CCV}_T = \tilde{\mathcal{O}}(\sqrt{KTU_T}).
    \end{align*}

    \item \textbf{Long-term Budget Feasible with non-negative cost (the $\mathsf{CBwK}$ problem):} For a benchmark $\pi^\star$ that is long-term budget feasible for a total budget $B_T \ge 0$ (Def. \ref{assum:lt_budget}) and non-negative costs, then choosing $\Phi(x) = \exp(\lambda x)$ with $\lambda= (8 \sqrt{KU_T T} + 2B_T)^{-1},$ Algorithm \ref{ccb2} yields:
    \begin{align*}
         \mathbb{E}\mathsf{Regret}_T \le O(\sqrt{KU_T T}), ~~
         \mathbb{E} \mathsf{CCV}_T = \tilde{\mathcal{O}}(\sqrt{KTU_T} + B_T \log T).
    \end{align*}

    \item \textbf{Non-negative average \Regret:} If the benchmark policy $\pi^\star$ is feasible in expectation and the online policy has non-negative average regret, \emph{i.e.,} $\frac{1}{T}\sum_{t=1}^T \mathbb{E} \mathsf{Regret}_t(\pi^\star) \geq 0,$ then choosing $\Phi(x)=\nicefrac{x^2}{V}$ with $V = \sqrt{KTU_T} + B_T$,  Algorithm \ref{ccb2} yields:
\begin{align*}
    \mathbb{E} \mathsf{Regret}_T = O(\sqrt{KU_T T}), ~~
    \mathbb{E}\mathsf{CCV}_T = O(\sqrt{KU_T T}).
\end{align*}
\item \textbf{Long-term Budget Feasible with signed cost (the $\mathsf{CBwLC}$ problem):} Assuming stochastic contexts and for a benchmark $\pi^\star$ that is long-term budget feasible for a total budget $B_T$ (Def. \ref{assum:lt_budget}), where $B_T$ is small ($B_T=o(T)$), and signed costs (allowing negative values, c.f. part (d)), then choosing $\Phi(x)=\nicefrac{x^2}{V}$ with $V = \sqrt{KTU_T} + B_T$,  Algorithm \ref{ccb2} yields:
\begin{align*}
    \mathbb{E} \mathsf{Regret}_T = O(\sqrt{K}T^{3/4}U_T^{1/4}+B_T), ~~
    \mathbb{E}\mathsf{CCV}_T = O(\sqrt{K}T^{3/4}U_T^{1/4}+\sqrt{TB_T}).
\end{align*}
\end{enumerate}
\end{theorem}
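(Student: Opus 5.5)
Every part will be derived from the Regret Decomposition Inequality \eqref{reg-decomp-ultimate}, instantiated with the Lyapunov function prescribed in that part. For the quadratic cases (a), (b), (e), (f) I take $\Phi(x)=x^2/V$, so $\Phi'(x)=2x/V$ and $\Phi''\equiv 2/V$. Multiplying \eqref{reg-decomp-ultimate} by $V$ gives
\[
\mathbb{E}Q^2(t)+V\,\mathbb{E}\mathsf{Regret}_t\le 4V\sqrt{KU_Tt}+2t+8\sqrt{KU_T}\,R_t ,
\qquad R_t:=\sqrt{\textstyle\sum_{\tau\le t}\mathbb{E}Q^2(\tau)} ,
\]
where I move the expectation inside the square root using Jensen.

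\textbf{Parts (a) and (e).} Summing over $t\le T$ and using the trivial bound $\mathsf{Regret}_t\ge -2t$ yields a quadratic inequality $R_T^2\le aR_T+b$. By Lemma \ref{lemma:quadratic} this gives $R_T=O(\sqrt V\,T+T\sqrt{KU_T})$. Substituting back at $t=T$, and using $\mathbb{E}Q^2\ge 0$ together with $(\mathbb{E}Q)^2\le\mathbb{E}Q^2$, gives the Regret and \CCV~bounds. With $V=\sqrt{KTU_T}$ both are $O(\sqrt K T^{3/4}U_T^{1/4})$. For (e), the nonnegative-average-regret hypothesis lets me drop the $VT^2$ term, so $R_T=O(T\sqrt{KU_T})$, and both bounds improve to $O(\sqrt{KTU_T})$, as in Appendix \ref{sec:non-neg-reg}. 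Part (f) repeats this argument with the extra term $\Phi'(Q)\,\mathbb{E}g(x,a^\star)\le 2Q\,B_T/(TV)$, which is handled by Cauchy--Schwarz as in the appendix.

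\textbf{Part (b).} Under Slater's condition the drift inequality carries the extra term $-\epsilon\,\Phi'(Q(t-1))=-2\epsilon Q(t-1)/V$ on the right-hand side. Summing yields $\frac{2\epsilon}{V}\sum_\tau\mathbb{E}Q(\tau)\le 2T+4\sqrt{KU_TT}+\frac{8}{V}\sqrt{KU_T}\,R_T$. The regret is at least $-2T$, since the comparator enjoys no advantage beyond this trivial bound. I then control $R_T$ by combining the analogous summed quadratic inequality with $Q\le$ its running average. With $V=\sqrt{KTU_T}$ the right-hand side is $O(T)$, so $\frac1T\sum_\tau\mathbb{E}Q(\tau)=O(V/\epsilon)=O(\sqrt{KTU_T}/\epsilon)$. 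The corollary then follows by Markov's inequality applied over a uniformly random round.

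\textbf{Parts (c) and (d).} Here $\Phi=e^{\lambda x}$, so $\Phi''=\lambda\Phi'=\lambda^2\Phi$. Almost-sure feasibility gives $\Phi'(Q(t-1))\,g(x,a^\star)\le 0$ exactly. In (d), with nonnegative costs, the comparator term sums to at most $\lambda B_T\max_\tau\Phi(Q(\tau))$, since $Q$ is nondecreasing. Using $\sqrt{\sum\Phi'^2}\le\lambda\sqrt T\max_\tau\Phi$ and $\sum\Phi''\le\lambda^2T\max\Phi$, the choice $\lambda=(8\sqrt{KU_TT}+2B_T)^{-1}$ gives the inequality $\mathbb{E}\Phi(Q(t))+\mathbb{E}\mathsf{Regret}_t\le 4\sqrt{KU_Tt}+1+\tfrac12\max_{\tau\le t}\mathbb{E}\Phi(Q(\tau))$. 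The \textbf{main obstacle} is that the maximum sits outside the expectation. I would first try taking $t^\ast=\arg\max_{\tau}\mathbb{E}\Phi(Q(\tau))$ and applying the inequality at $t^\ast$, using $\mathsf{Regret}\ge-2t$. This gives $\max_\tau\mathbb{E}\Phi=O(T)$, and then Jensen yields $\mathbb{E}Q(T)\le\lambda^{-1}\log O(T)=\tilde O(\sqrt{KTU_T}+B_T\log T)$. Plugging $\max_\tau\mathbb{E}\Phi=O(T)$ back in only gives $O(T)$ for the regret. To obtain $O(\sqrt{KU_TT})$ I would bound the $\Phi''$ and $\sqrt{\sum\Phi'^2}$ terms more carefully, as $\lambda\,\mathbb{E}\Phi(Q(t))$ times constants absorbed into the left-hand side, so that the term cancels rather than being bounded by $T$.
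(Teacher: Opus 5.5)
Your parts (a), (b), (e), (f) follow the paper's route. For (b), reuse the part-(a) bound on $R_T$, which still applies because Slater's condition implies feasibility in expectation; the appeal to ``$Q\le$ its running average'' is unclear and unnecessary. The genuine gap is in parts (c) and (d), and it is exactly the obstacle you flag. You never use the property that makes the exponential potential work: $Q$ must be \emph{pathwise} nondecreasing.

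\textbf{Part (d).} The costs are nonnegative, so almost surely $\max_{\tau\le t}\Phi(Q(\tau))=\Phi(Q(t))$. Hence, almost surely,
\[
\sum_{\tau\le t}\Phi''(Q(\tau))\le\lambda^2T\,\Phi(Q(t)),\qquad \sqrt{\sum_{\tau<t}\Phi'(Q(\tau))^2}\le\lambda\sqrt{T}\,\Phi(Q(t)),
\]
and the budget term is at most $\lambda B_T\,\Phi(Q(t))$. After taking expectations, the right-hand side contains $c\,\mathbb{E}\Phi(Q(T))$ with a constant $c<1$ for the stated $\lambda$ (the paper uses $2/3$). This is the same quantity as on the left, so it is absorbed:
\[
(1-c)\,\mathbb{E}e^{\lambda Q(T)}+\mathbb{E}\mathsf{Regret}_T\le 4\sqrt{KU_TT}+1 .
\]
Dropping the nonnegative first term gives the $O(\sqrt{KU_TT})$ regret immediately. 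Using $\mathsf{Regret}_T\ge-2T$ instead gives $\mathbb{E}e^{\lambda Q(T)}=O(T)$, and Jensen then gives the \CCV~bound. Your $t^\ast=\arg\max$ detour, and your conclusion that plugging back ``only gives $O(T)$'', both come from not noticing that here $\max_{\tau\le T}\mathbb{E}\Phi(Q(\tau))$ \emph{equals} $\mathbb{E}\Phi(Q(T))$. Your closing sentence gestures at absorption but does not supply this mechanism.

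\textbf{Part (c).} Here the problem is more serious. With signed costs $Q$ is not monotone, so your inequality with $\max_{\tau\le t}\mathbb{E}\Phi(Q(\tau))$ is not justified. Your pathwise bound on the square-root term only yields $\lambda\sqrt{T}\,\mathbb{E}\max_{\tau\le t}\Phi(Q(\tau))$. This can be much larger than $\max_\tau\mathbb{E}\Phi(Q(\tau))$ and cannot be absorbed into $\mathbb{E}\Phi(Q(t))$, so even your \CCV~argument for (c) fails.

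There is also a warning sign: you use almost-sure feasibility only to make the comparator term nonpositive, which expected feasibility already does. If your argument were complete, it would give $\tilde O(\sqrt{KTU_T})$ bounds under the weaker hypothesis of part (a).

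The missing step is the paper's reduction: replace $g_t$ by $\max(0,g_t)$. Almost-sure feasibility guarantees that $\pi^\star$ incurs zero modified cost, since $\pi^\star(x_t,a)>0$ forces $g_t(x_t,a)\le 0$, so the regret decomposition inequality still holds. The modified cumulative cost upper-bounds the original \CCV, and it is nonnegative and pathwise nondecreasing. After this reduction, (c) is the $B_T=0$ case of the argument for (d).
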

Due to space constraints, we only give proof of part (a) of the above theorem in the main paper. The complete proofs of the remaining results are provided in Appendix B-F. Thanks to the regret decomposition framework (Eqn.\ \eqref{reg-decomp-ultimate}), the proofs are elementary, requiring just a few lines of algebra. 
\paragraph{Proof of Theorem \ref{thm:main} (a) [Feasible in Expectation Benchmark (Definition \ref{assum:in_expect})]} \label{sec:in_expect}

Let us choose the Lyapunov function $\Phi(x) = \frac{x^2}{V}$, for some parameter $V>0$ which will be fixed later. Then, multiplying both sides by $V,$ the regret decomposition inequality in Eqn.\ \eqref{reg-decomp-ultimate} yields for any $t \in [T]:$
\begin{align}
\label{eq:original_ineq}
\mathbb{E}Q^2(t) - Q^2(0) + V\mathbb{E} \textrm{Regret}_t (\pi^\star) \leq 4V\sqrt{K U_Tt} + 2t + 8 \sqrt{K U_T}\sqrt{\sum_{\tau=1}^t \mathbb{E}Q^2(t)}.
\end{align}
Summing from $t=1$ to $T$, and noting that $Q(0)=0,$ we have
\begin{align} \label{Q-decomp-2}
    \sum_{t=1}^T\mathbb{E}Q^2(t) + V\sum_{t=1}^T\mathbb{E} \textrm{Regret}_t (\pi^\star) \leq 4VT^{3/2}\sqrt{K U_T} + 2T^2 + 8 T\sqrt{K U_T}\sqrt{\sum_{t=1}^T \mathbb{E}Q^2(t)}.
\end{align}

Let us now define the variable $R(T) := \sqrt{\sum_{t=1}^T \mathbb{E}Q^2(t)}.$ Note that we trivially have $\mathbb{E} \textrm{Regret}_t (\pi^\star) \geq -2T$. This is because $\mathbb{E}\textrm{Regret}_t (\pi^\star) = \sum_\tau f^\star(\pi_{\tau}(x_\tau),x_\tau) - f^\star(\pi^\star(x_\tau),x_\tau)$ and since we assume that the function $f^\star$ takes values in $[-1,1],$ we have $f^\star(\pi_{\tau}(x_\tau),x_\tau) - f^\star(\pi^\star(x_\tau),x_\tau) \geq -2\,\, \forall \tau \in [T]$. Plugging in this bound in inequality \eqref{Q-decomp-2}, we conclude:
\begin{align*}
    R^2(T) \leq VT^2 + 2T^2 + 4VT^{3/2}\sqrt{K U_T} + 8 T\sqrt{K U_T}R(T).
\end{align*}
Noticing that the above inequality is of the form $x^2 \leq ax + b$ where $x\equiv R(T),$ and using the bound from Lemma \ref{lemma:quadratic}, we obtain the following upper bound on $R(T)$:
\begin{align}
\label{eq:bound-R(t)}
    R(T) \leq \sqrt{V}T + 2T + 2K^{1/4}\sqrt{V}T^{3/4} (U_T)^{1/4} + 8 T\sqrt{K U_T}.
\end{align}

We further note that inequality \eqref{eq:original_ineq} can be rewritten in term of $R(T)$ as follows:
\begin{align} \label{q-reg-ineq0}
\mathbb{E}Q^2(T) + V\mathbb{E} \textrm{Regret}_T (\pi^\star) \leq 4V\sqrt{K U_TT} + 2T + 8 \sqrt{K U_T}R(T).
\end{align}
Plugging in the upper bound on $R(T)$ from \eqref{eq:bound-R(t)} into the above inequality, we obtain
\begin{align} \label{Q-reg-ineq1}
&\mathbb{E}Q^2(T) + V\mathbb{E} \textrm{Regret}_T (\pi^\star) \nonumber \\ \leq &4V\sqrt{K U_TT} + 2T + 8 \sqrt{VK U_T}T + 16\sqrt{KU_T}T + 16 K^{3/4}\sqrt{V} T^{3/4} U_T^{3/4} + 64KU_TT.
\end{align}
Hence, using $\mathbb{E}(Q^2(T))\geq 0,$ we obtain the following regret bound:
\begin{align*}
    \mathbb{E} \mathsf{Regret}_T \leq O\bigg(\max(\sqrt{K TU_T}, \sqrt{\frac{KU_T}{V}}T, \frac{1}{\sqrt{V}}K^{3/4} T^{3/4} U_T^{3/4}, \frac{KU_T T}{V})\bigg).
\end{align*}
Furthermore, substituting the trivial regret lower bound $\mathbb{E} \textrm{Regret}_t (\pi^\star) \geq -2T$ into \eqref{Q-reg-ineq1} and using Jensen's inequality $\mathbb{E}Q^2(T) \geq (\mathbb{E}(Q(T)))^2,$ we obtain the following \CCV~bound
\begin{eqnarray} \label{q-bd-no-assump}
    \mathbb{E}Q(T)\leq O\bigg(\max(\sqrt{VT},\sqrt{V}(K TU_T)^{1/4},  (VK U_T)^{1/4}\sqrt{T} , K^{3/8}V^{1/4} T^{3/8} U_T^{3/8}, \sqrt{KU_TT})\bigg).
\end{eqnarray}
Choosing $V = \sqrt{KTU_T},$ we conclude $ \mathbb{E} \mathsf{Regret}_T = O(\sqrt{K}T^{3/4}U_T^{1/4}),~ \mathbb{E}\mathsf{CCV}_T = O(\sqrt{K}T^{3/4}U_T^{1/4}).$

%% file: conclusion.tex
\section{Conclusion}
We propose a modular, unified algorithmic framework for constrained contextual bandits under general realizability assumptions, with adversarially chosen contexts. By removing any distributional assumptions on the context sequence, our results apply to non-stationary environments and automatically subsume the stochastic setting as a special case. The central technical contribution is a general regret decomposition inequality that cleanly separates the roles of exploration (via Inverse Gap Weighting), constraint management (via Lyapunov-based surrogates), and statistical estimation (via online regression oracles). This decomposition yields a transparent analysis pipeline through which regret and cumulative constraint violation guarantees follow immediately for different feasibility benchmarks and structural assumptions considered in the prior literature, including almost sure feasibility, Slater’s condition, and constrained contextual bandits with knapsacks.

%% file: budget_constraints.tex

%% file: CBwK.tex
\section{Related Works}
\paragraph{Contextual Bandits:}
The Multi-armed Bandit (MAB) problem is a fundamental paradigm in online decision-making \citep{auer2002finite,bubeck2012regret,lattimore2020bandit}, where an agent seeks to maximize cumulative rewards in the face of environmental uncertainty. Contextual bandits (CB) extend this by providing the learner with context to inform their actions. Much of the literature on CB assumes linear realizability \citep{rusmevichientong2010linearly,abbasi2011improved,li2010contextual,abeille2017linear,agrawal2013further}, employing classic strategies like Upper Confidence Bound (UCB) \citep{abbasi2011improved}, Thompson sampling \citep{chapelle2011empirical}, and randomized exploration \citep{vaswani2019old} to create efficient algorithms.

To move beyond strict realizability, researchers explored \CB~using classification oracles \citep{dudik2011efficient,agarwal2014taming, langford2007epoch}. While these allow for more general models, the underlying classification task can be computationally intractable even for simple hypothesis classes \citep{klivans2009cryptographic}. In response, \CB~frameworks based on regression oracles have emerged as a more practical and computationally efficient alternative \citep{foster2018practical,foster2020beyond,simchi2022bypassing}.

\paragraph{The Decision-to-Estimation framework:} Building upon the concept of using regression for exploration \cite{foster2018practical} introduced a method to handle general function classes via an offline regression oracle. This approach is particularly appealing because weighted least squares oracles are widely available, efficient to implement, and support gradient-based optimization. Empirical studies in \cite{foster2018practical, bietti2021contextual} have demonstrated that these algorithms often outperform existing \CB~methods in practice.

However, the algorithm in \cite{foster2018practical} remains theoretically sub-optimal, potentially incurring linear regret in the worst case. This led to the open question of whether optimal regret could be achieved using an offline-oracle-based approach. More recently, \cite{foster2020beyond} achieved optimal regret by utilizing an online regression oracle. While this reduction is a significant theoretical advancement that holds under minimal assumptions and adversarial settings, the requirement for an online oracle can be somewhat restrictive, as efficient algorithms for online regression are known for only a limited set of function classes, compared to the much broader range available for offline regression.

\paragraph{Contextual Bandits with Long-term Constraints:}

The field of contextual bandits with constraints expands the traditional framework by introducing costs, where the goal of the learner is to maximize the cumulative rewards while adhering to long-term constraints. Early investigations primarily addressed knapsack constraints, where the process ends immediately if a resource is depleted in what is known as ``hard stopping"\citep{badanidiyuru2014resourceful, agrawal2014bandits, wu2015algorithms, agrawal2016linear, badanidiyuru2018bandits, sivakumar2022smoothed, chzhen2023small, guo2025stochastic}. For the linear case, \citep{agrawal2016linear} achieved an optimal theoretical regret of $\tilde{O}((1+\frac{\nu^{*}}{b})\sqrt{T})$. These findings were later generalized to broader function classes using regression oracles for concurrent reward and cost estimation \citep{han2023optimal, slivkins2022efficient}. Works which assume hard-stopping have to inevitably rely on the existence of a NULL arm, which is defined to be an
arm with zero cost and consumption at all rounds. 

Fairness has also emerged as a significant area of study in the bandit literature. Key concepts include individual fairness—treating similar entities alike \citep{dwork2012fairness, joseph2016fairness, chzhen2023small} —and minimum-selection fairness, which mandates that every arm is chosen at least a certain percentage of the time \citep{li2019combinatorial, chen2020fair, claure2020multi, sinha2023banditq}. Additionally, group fairness research seeks to equalize average expenditures across different demographic groups \citep{chohlas2024learning, chzhen2023small}. These specialized algorithms embed fairness directly into the policy-making process.

More generalized settings, such as those in \citet{slivkins2023contextual}, evaluate performance via both regret and long-term cumulative violation, allowing for both positive and negative costs. While they established a $\tilde{O}(\sqrt{T}/\delta)$ guarantee, their approach requires Slater's condition and prior knowledge of the Slater constant $\delta$. \citet{guo2024stochastic} provided the first results without Slater's condition in the adversarial setting, achieving $\tilde{O}(T^{\frac{3}{4}})$ regret and violation, or $\tilde{O}(\sqrt{T}/\delta^{2})$ when the condition holds without needing to know $\delta$ beforehand. In non-contextual settings, \citet{sinha2023banditq} also achieved $\tilde{O}(T^{\frac{3}{4}})$ without Slater's condition. In the non-contextual adversarial setting, \citet{bernasconi2024beyond} achieved $O(\sqrt{T})$ $\nicefrac{\delta}{(1+\delta)}$-approximate regret and \CCV~in the adversarial setting, where the benchmark could become arbitrarily weaker as Slater's constant $\delta >0$ becomes small. The challenge of achieving $\tilde{O}(\sqrt{T})$ regret and violation through an efficient algorithm for general contextual bandits without Slater's condition remains an open question that this paper addresses. Other related studies \citep{moradipari2021safe, amani2019linear, khezeli2020safe, pacchiano2025contextual, pmlr-v247-gangrade24a, pmlr-v162-chen22e, gangrade2025constrainedlinearthompsonsampling} focus on stage-wise safety, but these typically require a known feasible starting point or the repetitive calculation of safe regions. Below, we separately discuss the works that are closest to ours.

\subsection{Comparison with Related Works} \label{comparison}
We summarize our results in the table below and place them in context with related work
\input{related_works}

\citet{guo2024stochastic} uses a Lyapunov-based framework to achieve regret and violation bounds of $\tilde{O}(T^{3/4}U^{1/4})$ without assuming Slater's condition and under the assumption of expected feasibility of the benchmark. We get the same bounds under the same benchmark, but we tackle the much more challenging scenario of \emph{adversarial} contexts. 

When the Slaters' condition is assumed, \citet{guo2024stochastic} achieve regret and violation bounds of $\tilde{O}(\frac{\sqrt{TU}}{\epsilon^2})$ which holds when $\epsilon = \Omega\bigg(\sqrt{\frac{U}{T}}\bigg)$. In our case, we obtain an improvement of a factor of $\frac{1}{\epsilon}$ to get a bound of $\tilde{O}(\frac{\sqrt{TU}}{\epsilon})$ for the \textit{average} \CCV, partially matching a conjecture raised by \citet[Section 4.2]{guo2024stochastic} in the stronger adversarial setting. 

In a more recent work, \cite{guotriple} achieves $\Tilde{O}(\sqrt{TU})$ bounds on regret and violation, but does so under the very strong assumption of a known probability distribution of contexts. Needless to say, in most practical cases, this assumption is not satisfied. On the other hand, under the much milder assumption of almost sure feasibility and adversarial contexts, we obtain performance bounds of the same order of magnitude. We also achieve the same bound when we assume a non-trivial lower bound on the average \Regret~ and expected feasibility.

Our results for the long-term budget-feasible benchmark align closely with the literature on Contextual Bandits with Knapsacks ($\mathsf{CBwK}$) when costs are non-negative, and on Contextual Bandits with Linear Constraints ($\mathsf{CBwLC}$) when negative costs are permitted. 
\cite{han2023optimal} considered the problem of stochastic contextual bandits with knapsack constraints. However, they assume the hard-stopping setting, so their results are not directly comparable to ours. \citet{guo2025stochastic} also considered the hard-stopping setting and improved upon the results presented in \citet{han2023optimal}. \citet{slivkins2023contextual} considered the continuing case for $\mathsf{CBwLC}$ where the learner continues to use resources even after the budget is exhausted, but they assumed a very large budget $\Omega(T)$ and also assumed strict feasibility in the form of a known Slater's condition. They achieved regret and violation bounds of $O(T^{3/4}U_T^{1/4})$. 

\section{Improved bounds assuming Slater's condition} \label{slater-sec}

When we assume Slater's condition holds (but the value of Slater's constant need not be known to the algorithm), we can strengthen the Regret decomposition inequality derived above. Since, in this case $\mathbb{E}_{a^\star \sim \pi_0}g_t(x_t, a^\star) \leq -\epsilon,$ taking the conditional expectation of both sides of \eqref{drift-ineq1} w.r.t. $\mathcal{F}_{t-1}$, we get one additional negative term on the RHS as shown below:
\begin{eqnarray}\label{RDI-Slater}
	&&\mathbb{E}(\Phi(Q(t))|\mathcal{F}_{t-1}) - \Phi(Q(t-1))+ \langle \bm{f}^\star(x_t), \bm{\pi}_t(x_t) - \bm{\pi_0}(x_t)\rangle \nonumber\\
	&\leq& \langle \bm{L}^\star_t(x_t), \bm{\pi}_t(x_t) - \bm{\pi}_0(x_t) \rangle
 	 + \frac{1}{2}(\Phi''(Q(t))+ \Phi''(Q(t-1))) - \epsilon \Phi'(Q(t-1)),
\end{eqnarray} 
Similar to \eqref{reg-decomp-ultimate}, taking expectations of both sides, summing them up and substituting the upper bound for the surrogate regret, we have the following inequality
\begin{align*}
	&\mathbb{E}(\Phi(Q(t))) - \mathbb{E}(\Phi(Q(0))) + \mathbb{E} \textrm{Regret}_t (\bm{\pi}_0) \\ \leq &4 \sqrt{KU_Tt } + \sum_{\tau=1}^t \mathbb{E}\Phi''\big[(Q(\tau))]\big)+ 4 \sqrt{KU_T} \sqrt{\sum_{\tau=1}^{t-1} \mathbb{E}\bigg([\Phi'(Q(\tau))]^2\bigg)}
	 -\epsilon \sum_{\tau=1}^{t-1} \mathbb{E}\Phi'(Q(\tau)). 
\end{align*}

As before, we choose the quadratic Lyapunov function $\Phi(x) = \frac{x^2}{V},$ with $V=\sqrt{KTU_T}$.
 Proceeding as before, we have
\begin{eqnarray}\label{avg-q-bd}
	\mathbb{E}Q^2(T) + 2\epsilon \sum_{\tau=1}^{T-1} \mathbb{E}Q(\tau) \leq VT+ 2T + 4 \sqrt{KU_TT} + 8 \sqrt{KU_T} R(T).
\end{eqnarray} 
From our previous results in Section \ref{assum:in_expect}, we concluded that $R(T) = O((KU_T)^{1/4} T^{5/4}).$ Hence, from Eqn.\ \eqref{avg-q-bd}, we conclude that 
\begin{eqnarray*}
	\frac{1}{T} \sum_{\tau=1}^T \mathbb{E}Q(\tau) = O(\frac{V}{\epsilon}) = O\big(\frac{\sqrt{KTU_T}}{\epsilon}\big),
\end{eqnarray*}
which gives a sharper bound than the bound in Section \ref{assum:in_expect} for the average $\mathsf{CCV},$ improved by a factor of $O(\frac{1}{\epsilon})$. Using Markov's inequality, this result shows that for any fixed, say $99\%$ of the total number of rounds, the \CCV~is at most $O(\frac{\sqrt{KTU_T}}{\epsilon}).$ This result also provides a valuable insight into an open question posed by \citep[Lemma 6]{guo2024stochastic} in the stronger adversarial setting, who conjectured that the above result holds for the terminal \CCV. 

\section{Almost Surely Feasible Benchmark (Definition \ref{assum:almost_sure})} \label{as-analysis}

In case of almost sure feasibility, we first modify the problem instance where each cost function is replaced with its positive part, \emph{i.e.,} $g_t (\cdot, \cdot) \gets \max(0, g_t(\cdot, \cdot))), \forall t.$ Because of the almost sure feasibility assumption, it follows that $\pi^\star$ is also a feasible policy for the new problem instance, and hence, the Regret Decomposition inequality \eqref{reg-decomp-ultimate} remains valid. Furthermore, using the non-negative property of the cost functions, it follows that the \CCV~sequence $\{Q(t)\}_{t \geq 1}$ is almost surely monotone non-decreasing - a new property which does not hold in the previous cases. Finally, we choose the Lyapunov function to be the exponential function, \emph{i.e.,} $\Phi(x)\equiv \exp(\lambda x),$ for some $\lambda >0$ to be fixed later. With this choice, \eqref{reg-decomp-ultimate} simplifies to
\begin{eqnarray*}
	\mathbb{E}\exp(\lambda Q(T))- 1 + \mathbb{E} \textrm{Regret}_T (\pi^\star) &\leq& 4 \sqrt{KU_TT} + \lambda^2 T \mathbb{E}\exp(\lambda Q(T)) \\
    &&+ 4 \sqrt{KU_TT} \lambda \mathbb{E}\exp(\lambda Q(T)). 
\end{eqnarray*}
Finally, choosing $\lambda = \frac{1}{8\sqrt{KU_T T}},$ it follows that 
 \begin{eqnarray}\label{final-bd2}
 	\mathbb{E}\exp(\lambda Q(T))- 1 + \mathbb{E} \textrm{Regret}_T (\pi^\star) \leq 4 \sqrt{KU_TT} + \frac{2}{3} \mathbb{E}\exp(\lambda Q(T)).
 \end{eqnarray}
 which implies
 \begin{eqnarray*}
 	\frac{1}{3}\mathbb{E}\exp(\lambda Q(T))- 1 + \mathbb{E} \textrm{Regret}_T (\pi^\star) \leq 4 \sqrt{KU_TT}.
 \end{eqnarray*}
 Since $Q(T) \geq 0,$ the above inequality immediately yields \[\mathbb{E} \textrm{Regret}_T (\pi^\star) \leq  4 \sqrt{KU_TT}+ \nicefrac{2}{3}.\]
 Finally, to bound the constraint violations (\textsf{CCV}), note that since all cost vectors are upper bounded by unity, we have $\textrm{Regret}_T (\pi^\star) \geq -T.$ Substituting this lower bound in \eqref{final-bd2}, we obtain 
 \begin{eqnarray*}
 \frac{1}{3}\exp(\lambda \mathbb{E} Q(T)) \stackrel{\textrm{(Jensen's ineq.)}}{\leq}\frac{1}{3}\mathbb{E}\exp(\lambda Q(T)) \leq 1+ T + 4 \sqrt{KU_TT},
 \end{eqnarray*}
which implies the following bound for \CCV:
 \begin{eqnarray*}
 \mathbb{E}Q(T) = \tilde{O}(\sqrt{KTU_T}).
 \end{eqnarray*}
\section{Contextual Bandits with Knapsacks Constraints (\cbwk)}
\label{sec:lt_budget}
In this section, we consider non-negative costs (resource consumption) and a long-term budget feasible benchmark (Definition \ref{assum:lt_budget}) with an \emph{arbitrary} budget of $B_T \geq 0.$

For simplicity, we consider a single resource. Our derivation also generalizes to multiple resources as discussed in Section \ref{subsec:multiple_resources}. This particular problem is known as the constrained contextual bandits with knapsack constraints (\cbwk) in the literature. \cbwk~was considered earlier by \citet{slivkins2023contextual, han2023optimal} in the special case of a large budget regime where $B_T=\Omega(T)$ and assuming a known and positive slack to the resource constraint. Their algorithm is based on a primal-dual scheme, called $\mathsf{LagrangeBwK}$, first introduced for the (non-contextual) Bandits with Knapsacks ($\mathsf{BwK}$) problem \citep{badanidiyuru2018bandits}. Our method is entirely different from $\mathsf{LagrangeBwK},$ and uses the previous regret decomposition scheme with an exponential Lyapunov function as described next. 
 
We begin with inequality \eqref{drift-ineq1} that gives an upper bound to the sum of the drift and incremental regret. Choosing the benchmark policy $\pi^\star$ to be any long-term budget feasible policy, and taking the conditional expectation of both sides of Eqn.\ \eqref{drift-ineq1} with respect to the randomness of the reward and cost functions and the randomness of the online and the benchmark policies, it follows that
\begin{align}
\label{eq:with_benchmark}
 	&\mathbb{E}(\Phi(Q(t))|\mathcal{F}_{t-1}) - \Phi(Q(t-1))+ \langle \bm{f}^\star(x_t), \bm{\pi^\star}(x_t) - \bm{\pi}_t(x_t)\rangle \nonumber \\
    \leq &\langle \bm{L}^\star_t(x_t),\bm{\pi}^\star(x_t) -\bm{\pi}_t(x_t) \rangle 
 	 + \frac{1}{2}(\Phi''(Q(t))+ \Phi''(Q(t-1))) + \Phi'(Q(t-1)) \mathbb{E}_{a^\star \sim \pi^\star(x_t)} g_t(x_t, a^\star),
 \end{align} 
where the target surrogate function $L_t^\star$ and the estimated surrogate function $\hat{L}_t$ have been defined in Eqns.\ \eqref{est-surr-cost} and \eqref{surr-cost-def} respectively. Next, following exactly the same derivation as in Section \ref{surr-reg-bd}, we conclude the following generalized form of regret decomposition inequality
 \begin{eqnarray}  \label{new-reg-decomp}
	&&\mathbb{E}(\Phi(Q(t))) - \mathbb{E}(\Phi(Q(0))) + \mathbb{E} \textrm{Regret}_t (\pi^\star)  \leq 4 \sqrt{KU_Tt }\nonumber \\ &&+ \sum_{\tau=1}^t \mathbb{E}\Phi''\big[(Q(\tau))]\big)+ 4 \sqrt{KU_T}\mathbb{E} \sqrt{\sum_{\tau=1}^{t-1}\bigg([\Phi'(Q(\tau))]^2\bigg)} + \mathbb{E}[\Phi'(Q(t-1))] B_T.
\end{eqnarray}
While bounding the last term, we have used the fact that since the costs are non-negative, the sequence $\{\Phi'(Q(\tau))\}$ is non-decreasing and hence, we have almost surely 
\begin{eqnarray} \label{new-reg-decomp3}
 \sum_{\tau=1}^t \Phi'(Q(\tau-1)) \mathbb{E}_{a^\star \sim \pi^\star(x_\tau)} c_\tau(x_\tau, a^\star) &\leq& \Phi'(Q(t-1)) \sum_{\tau=1}^t \mathbb{E}_{a^\star \sim \pi^\star(x_\tau)} c_\tau(x_\tau, a^\star) \nonumber \\
 &\stackrel{(a)}{\leq}&  \Phi'(Q(t-1)) B_T, 
 \end{eqnarray}
where (a) follows from the long-term budget-feasibility of the benchmark policy $\pi^\star.$
Note that the only difference between Eqn.\ \eqref{new-reg-decomp} and the previous regret decomposition inequality \eqref{reg-decomp-ultimate} is the presence of the term involving budget $B_T$ in the former. Because of this formal similarity, the analysis follows a similar line to that in Section \ref{as-analysis}. 

Using the monotonicity of the sequence $\{Q(\tau)\}_{\tau}$ once again and choosing the Lyapunov function to be the exponential function, \emph{i.e.,} $\Phi(x)\equiv \exp(\lambda x),$ for some $\lambda >0$ (to be fixed later), inequality \eqref{new-reg-decomp} simplifies to
\begin{eqnarray*}
	\mathbb{E}\exp(\lambda Q(T))- 1 + \mathbb{E} \textrm{Regret}_T (\pi^\star) &\leq& 4 \sqrt{KU_TT} + \lambda^2 T \mathbb{E}\exp(\lambda Q(T)) \\
    &&+ \lambda(4 \sqrt{KU_TT}+B_T)\mathbb{E}\exp(\lambda Q(T)). 
\end{eqnarray*}
Finally, choosing $\lambda = (8\sqrt{KU_T T} + 2B_T)^{-1},$
 we conclude
 \begin{eqnarray}\label{final-bd2-budget}
 	\mathbb{E}\exp(\lambda Q(T))- 1 + \mathbb{E} \textrm{Regret}_T (\pi^\star) \leq 4 \sqrt{KU_TT} + \frac{2}{3} \mathbb{E}\exp(\lambda Q(T)).
 \end{eqnarray}
 which yields
 \begin{eqnarray*}
 	\frac{1}{3}\mathbb{E}\exp(\lambda Q(T))- 1 + \mathbb{E} \textrm{Regret}_T (\pi^\star) \leq 4 \sqrt{KU_TT}.
 \end{eqnarray*}
 Since $Q(T) \geq 0,$ the above inequality immediately implies the following regret bound \[\mathbb{E} \textrm{Regret}_T (\pi^\star) \leq  4 \sqrt{KU_TT}+ \nicefrac{2}{3}.\]
 Finally, to bound the constraint violations (\textsf{CCV}), note that since all cost vectors are upper bounded by unity, we have $\textrm{Regret}_T (\pi^\star) \geq -T$. Substituting this in \eqref{final-bd2-budget}, it follows that 
 \begin{eqnarray*}
 \frac{1}{3}\exp(\lambda \mathbb{E} Q(T)) \stackrel{\textrm{(Jensen's ineq.)}}{\leq}\frac{1}{3}\mathbb{E}\exp(\lambda Q(T)) \leq 1+ T + 4 \sqrt{KU_TT},
 \end{eqnarray*}
 which implies the following bound for the \CCV:
 \begin{eqnarray*}
 \mathbb{E}Q(T) = \tilde{O}(\sqrt{KTU_T}) + O(B_T\log T).
 \end{eqnarray*}
 \paragraph{Discussion:}
Our results improve upon previous results on \cbwk~on multiple fronts. While \cite{slivkins2023contextual} assume 
(1) Stochastic contexts
(2) A positive and known slack $\zeta$ to the resource constraints \cite[Theorem 3.6]{slivkins2023contextual} and (3) Large budget regime where $B_T = \Omega(T)$, we remove all of the above rather restrictive assumptions by considering (1) adversarial contexts, (2) no assumption on the slack, and (3) arbitrary budgets with a compact and transparent analysis, directly leveraging the seminal \SqCB~framework. 
\subsection{Extension to Multiple Resources}
\label{subsec:multiple_resources}
To enable the analysis with $m$ resources, we would need to define multiple virtual queues $Q_i$ for each resource $i$ and a new surrogate reward function that accounts for all resources.
\begin{align}
    Q_i(t) = Q_i(t-1) + g_{t,i}(x_t,a_t).
\end{align}
We also define a new form of the surrogate reward function
\begin{eqnarray} \label{est-surr-cost-multiple}
	L^\star_t(x_t, a) = f^\star(x_t,a) - \sum_{i=1}^m\Phi'(Q_i(t-1)) g_i^\star(x_t,a), ~~ a \in [K], 
\end{eqnarray}
and the \emph{estimated surrogate} function as:
\begin{eqnarray} \label{surr-cost-def-multiple}
	\hat{L}_t(x_t, a) = \hat{f}_t(x_t,a) - \sum_{i=1}^m \Phi'(Q_i(t-1)) \hat{g}_{t,i}(x_t,a), ~~ a \in [K]. 
\end{eqnarray}
The rest of the analysis would be similar to the one in a single resource. 

%% file: related_works.tex
\begin{table*}[!h]
\centering
\label{Table:Related Works}
\renewcommand{\arraystretch}{1.5} 

\scalebox{0.9}{
\begin{tabular}{lll} 
\toprule
\multicolumn{1}{l}{\textbf{Benchmark}} & \multicolumn{1}{c}{\Regret} & \multicolumn{1}{c}{\CCV} \\ 
\midrule
Feasible in Expectation (Defn. \ref{assum:in_expect}) & $\mathcal{O}(T^{3/4}U_T^{1/4})$ & $\mathcal{O}(T^{3/4}U_T^{1/4})$ \\
Feasible in Expectation with Slater’s condition (Defn. \ref{assum:in_expect-slater}) & $\mathcal{O}(T^{3/4}U_T^{1/4})$ & $\mathcal{O}({\sqrt{T}})$ \\
Almost Surely Feasible (Defn. \ref{assum:almost_sure}) & $\mathcal{O}({\sqrt{TU_T}})$ & $\mathcal{O}({\sqrt{TU_T}})$ \\
Long-term Budget Feasible (Defn. \ref{assum:lt_budget}) ($\mathsf{CBwK}$) & $\mathcal{O}({\sqrt{TU_T}})$ & $\mathcal{O}({\sqrt{TU_T}}+B_T\log T)$ \\
Long-term Budget Feasible (Defn. \ref{assum:lt_budget}) ($\mathsf{CBwLC}$ under stochastic contexts) & $\mathcal{O}(T^{3/4}U_T^{1/4}+B_T)$ & $\mathcal{O}(T^{3/4}U_T^{1/4}+\sqrt{TB_T})$ \\
$\mathsf{Regret}_T \geq 0$ & $\mathcal{O}({\sqrt{TU_T}})$ & $\mathcal{O}({\sqrt{TU_T}})$ \\ 
\bottomrule
\end{tabular}
}
\caption{Summary of performance bounds achieved by Algorithm \ref{ccb2} under various assumptions on the benchmark (see Theorem \ref{thm:main}). Here $U_T$ is the cumulative squared estimation error (Eqn. \ref{oracle-guarantee}) incurred by the predictions produced by the Online Regression Oracle $\mathcal{O}_{\textrm{sq}}$. $B_T$ is the value of long-term budget that must be satisfied by the budget feasible benchmark defined in Definition \ref{assum:lt_budget}. We have also formally stated these results in Theorem \ref{thm:main}.}
\end{table*}